\theoremstyle{plain}
\newtheorem{theorem}{Theorem}[section]
\newtheorem{proposition}[theorem]{Proposition}
\theoremstyle{definition}
\newtheorem{definition}[theorem]{Definition}
\theoremstyle{remark}
\newcommand{\method}{MIBP-Cert}
\newcommand{\card}[1]{\left|#1\right|}
\newcommand{\dfamily}{\mathcal{D}}
\newcommand{\eqcolor}{teal}
\newcommand{\tightparagraph}[1]{%
  \par\noindent\textbf{#1}\hspace{.3em}%
}
\newcommand{\cmark}{\textcolor{green!60!black}{\ding{51}}}  
\newcommand{\xmark}{\textcolor{red!75!black}{\ding{55}}}    
\newcommand{\mix}{(\cmark)$^{\dag}$} 
\newcommand{\NA}{\textcolor{gray}{--}}                     %
\title{\method{}: Certified Training against Data Perturbations with Mixed-Integer Bilinear Programs}
\author{
  Tobias Lorenz \\
  CISPA Helmholtz Center \\for Information Security\\
  Saarbr\"ucken, Germany\\
  \texttt{tobias.lorenz@cispa.de} \\
  \And
  Marta Kwiatkowska \\
  Department of Computer Science\\
  University of Oxford\\
  Oxford, UK\\
  \texttt{marta.kwiatkowska@cs.ox.ac.uk} \\
  \And
  Mario Fritz \\
  CISPA Helmholtz Center \\for Information Security\\
  Saarbr\"ucken, Germany\\
  \texttt{fritz@cispa.de} \\
}
\begin{document}

\maketitle

\begin{abstract}
    Data errors, corruptions, and poisoning attacks during training pose a major threat to the reliability of modern AI systems. While extensive effort has gone into empirical mitigations, the evolving nature of attacks and the complexity of data require a more principled, provable approach to robustly learn on such data---and to understand how perturbations influence the final model.
    Hence, we introduce \method{}, a novel certification method based on mixed-integer bilinear programming (MIBP) that computes sound, deterministic bounds to provide provable robustness even under complex threat models. By computing the set of parameters reachable through perturbed or manipulated data, we can predict all possible outcomes and guarantee robustness.
    To make solving this optimization problem tractable, we propose a novel relaxation scheme that bounds each training step without sacrificing soundness. We demonstrate the applicability of our approach to continuous and discrete data, as well as different threat models---including complex ones that were previously out of reach.
\end{abstract}

\section{Introduction}
Data poisoning attacks are among the most significant threats to the integrity of machine learning models.
Attackers can exert far-reaching influence by targeting systems in sensitive domains such as finance, healthcare, or autonomous decision-making.
These attacks inject malicious data into the training process, causing models to make incorrect or harmful predictions after deployment.
Government agencies across the US and Europe have recognized poisoning as one of the fundamental threats to AI systems and highlight the need for robust defenses in their respective reports~\citep{enisa2020ai, nist2024adversarial} and legislation~\citep{eu_ai_act_2024}.

To counter this threat, many empirical defenses have been proposed~\citep{cina2023wild}, relying on data filtering, robust training, or heuristic inspection to mitigate the effect of adversarial data.
However, these methods offer no formal guarantees: they may reduce the harm of specific perturbations but cannot ensure robustness in general.
As a result, increasingly sophisticated poisoning attacks have been developed to bypass these defenses~\citep{suciu2018does}.

A recent line of work shifts toward \emph{provable guarantees} that bound the influence of training-time perturbations on model predictions~\citep{lorenz2024fullcert, sosnin2024certified}.
These methods adapt test-time certifiers~\citep{gowal2018effectiveness, zhang2018efficient} to the training process, using interval or polyhedral approximations to track parameter evolution under bounded perturbations.
They compute real-valued bounds on model parameters throughout training, constraining the effect of adversarial data.
These are important first steps toward certifiable robustness, offering deterministic guarantees under well-defined threat models.

However, while these types of bounds have been shown to be a good trade-off between precision and computational efficiency for test-time certification~\citep{li2023sok}, the significant over-approximations of polyhedral constraints limit the method's stability for training.
\citet{lorenz2024fullcert} show that interval bounds can cause training to diverge due to compounding errors.

We address this limitation by proposing \method{}, a precise certification method based on solving mixed-integer bilinear programs (MIBPs).
Rather than approximating training dynamics layer by layer, we frame certified training as an exact optimization problem in parameter space.
To keep the method tractable, we cut the computational graph after each parameter update and compute bounds iteration-wise.
The resulting bounds remain sound and avoid the divergence issues of prior methods.
In addition to tighter bounds, our formulation supports more flexible and expressive threat models, including complex conditional constraints and structured perturbations beyond $\ell_p$-balls.
Our experimental evaluation confirms the theoretical advantages of \method{}, showing improved stability and higher certified accuracy for larger perturbations compared to prior work.

To summarize, our main contributions are:
\begin{itemize}
\item \textbf{New formulation.} The first work to cast training-time robustness certification as a mixed-integer bilinear program, enabling a different solution path.
\item \textbf{Theoretical stability.} We show that our approach addresses fundamental convergence issues in prior approaches.
\item \textbf{Expanded scope.} The new formulation enables us to certify perturbation types that no prior method can handle.
\item \textbf{Precision gains.} Tighter bounds, leading to certified accuracy improvements, especially in challenging large-perturbation regimes.
\end{itemize}

The implementation of our method is available at \url{https://github.com/t-lorenz/MIBP-Cert}.

\section{Related Work}
\label{sec:related-work}

\begin{table}[t]
  \centering
  \small
  \begin{threeparttable}
  \setlength{\tabcolsep}{4pt} 
  \begin{tabular}{lcccc}
    \toprule
    \textbf{Property} & \textbf{FullCert~\citep{lorenz2024fullcert}} & \textbf{\citet{sosnin2024certified}}
                      & \textbf{BagFlip~\citep{zhang2022bagflip}} & \textbf{\method{} (Ours)} \\
    \midrule
    Deterministic certificates       & \cmark & \cmark & \xmark & \cmark \\
    $\ell_0$-norm perturbations      & \xmark & \mix   & \cmark & \mix \\
    $\ell_\infty$-norm perturbations & \cmark & \cmark & \xmark & \cmark \\
    Complex perturbations        & \xmark & \xmark & \xmark & \cmark \\
    Real-valued features             & \cmark & \cmark & \xmark & \cmark \\
    Categorical features             & \xmark & \xmark & \cmark & \cmark \\
    Stable bounds during training    & \xmark & \xmark & \NA    & \cmark \\
    \bottomrule
  \end{tabular}
  \begin{tablenotes}[flushleft]\footnotesize
    \item[$\dag$] Support for $\ell_0$ when training on the full dataset in a single batch.
  \end{tablenotes}
  \end{threeparttable}
  \vspace{.7em}
  \caption{Comparison of certified training methods.  Only \method{}
           combines deterministic certificates, mixed-integer constraints,
           and support for both real-valued and categorical features.}
  \label{tab:method-comparison}
  \vspace{-1.5em}
\end{table}

There are two major lines of work on certified defenses against training-time attacks: \emph{probabilistic ensemble-based methods} and \emph{deterministic bound-based} methods.

\tightparagraph{Probabilistic Certificates by Ensemble-Based Methods.}
Ensemble-based methods are typically based on either bagging or randomized smoothing.
\citet{wang2020certifying}, \citet{rosenfeld2020certified}, and \citet{weber2023rab} extend Randomized Smoothing~\citep{cohen2019certified} to training-time attacks.
While \citet{wang2020certifying} and \citet{weber2023rab} compute probabilistic guarantees against $\ell_0$ and $\ell_2$-norm adversaries respectively, \citet{cohen2019certified} provide these guarantees against label flipping attacks.

A similar line of work provides probabilistic guarantees against training-time attacks using bagging.
\citet{jia2021intrinsic} find that bagging's data subsampling shows intrinsic robustness to poisoning.
\citet{wang2022improved} enhance the robustness guarantees with advanced sampling strategies, and \citet{zhang2022bagflip} adapt this approach for backdoor attacks with triggers.
Some studies explore different threat models, including temporal aspects~\citep{wang2023temporal} and dynamic attacks~\citep{bose2024certifying}.
Recent work~\citep{levine2021deep} introduces a deterministic bagging method.
In contrast, our work advances the state-of-the-art in deterministic certification as well as complex perturbation models.

\tightparagraph{Deterministic Certificates by Bound-Based Methods.}
In contrast to these probabilistic certificates by ensemble-based sampling methods, \citet{lorenz2024fullcert} and \citet{sosnin2024certified} propose to compute sound deterministic bounds of the model's parameters during training.
They define a polytope of allowed perturbations in input space and propagate it through the forward and backward passes during training.
By over-approximating the reachable set with polytopes along the way, they compute sound worst-case bounds for the model's gradients.
Using these bounds, the model parameters can be updated with sound upper and lower bounds, guaranteeing that all possible parameters resulting from the data perturbations lie within these bounds.
\citet{lorenz2024fullcert} use intervals to represent these polytopes and extend the approach to also include test-time perturbations.
\citet{sosnin2024certified} use a combination of interval and linear bounds.
Both certify robustness to $\ell_\infty$-norm perturbations (i.e., clean-label poisoning), and \citet{sosnin2024certified} additionally limit the number of data points that can be perturbed.
In contrast, we provide \emph{tighter} relaxations by formulating deterministic certification of training in a mixed-integer bilinear program that allows more \emph{stable bounds} and enables \emph{complex perturbations} with both \emph{real-valued and categorical features} (\autoref{tab:method-comparison}).

\section{Certified Training using Mixed-Integer Bilinear Programming}
\label{MIBP}

The goal of training certification is to rigorously bound the error that perturbations to the training data can introduce in the final model. This is a fundamental challenge in certifying robustness to data poisoning: we seek guarantees that, under a specified threat model, bound the effect on the trained model and in turn can be used to guarantee correct behavior. 

Unfortunately, computing tight bounds on the final model for even small neural networks is infeasible in general. It has been shown to be a $\Sigma_2^P$-hard problem~\citep{marro2023computational}, meaning that to make certification tractable, we must settle for over-approximations. The challenge is to construct approximations that (i) are sound, (ii) introduce as little slack as possible, and (iii) remain computationally feasible.

Prior certified training approaches~\citep{lorenz2024fullcert, sosnin2024certified} address this by using Interval Bound Propagation (IBP) to derive parameter bounds. While sound, IBP suffers from two key limitations. First, it fails to preserve input-output dependencies, leading to compounding over-approximations even for simple operations. Second, bounds grow monotonically over training due to subtraction in the parameter update step. As shown by \citet{lorenz2024fullcert}, this often causes training to diverge entirely.

These issues motivate the core design of our method: we use mixed-integer bilinear programming (MIBP) to compute \emph{exact bounds for a single training step}---including the forward pass, loss, backward pass, and parameter updates. In principle, we could chain these exact encodings over the full training; in practice, this would be computationally prohibitive.

Instead, we relax only the parameter space at each step, bounding the possible parameter configurations resulting from any permitted perturbation to the training data. This avoids divergence while preserving exact reasoning within each training step. The result is a tractable certification scheme with precise per-step guarantees.

The remainder of this section formally defines our threat model (\cref{sec:MIBP:theory}), illustrates the approach on an illustrative example (\cref{sec:bimp:example}), describes the MIBP formulation for parameter bounds (\cref{sec:bimp:bounds}), and presents the implementation (\cref{sec:mibp:implementation}).

\subsection{Formal Threat Model and Certificate Definition}
\label{sec:MIBP:theory}

Certification provides guarantees that a desired postcondition (e.g., correctness) holds for all inputs satisfying a specified precondition (e.g., bounded perturbations).
In our data poisoning setting, the precondition formalizes the adversary's power to perturb the training dataset, and the postcondition encodes the property we aim to verify after training, e.g., correct classification.

\tightparagraph{Precondition | Threat Model.}
We consider perturbations of the original training dataset $D = \{(x_i, y_i)\}_{i=1}^n$ to a modified dataset $D' = \{(x_i', y_i')\}_{i=1}^n$, subject to constraints.
This permits modeling standard $\ell_p$-norm perturbations, as well as rich, dataset-wide threat models that couple samples, features, and labels.
While we model perturbations as an adversarial game, they can also have non-malicious, natural causes, such as measurement errors or data biases.
\begin{definition}[Family of Datasets]
\label{def:dataset-family}
Let $\mathcal{X} \subseteq \mathbb{R}^d$ and $\mathcal{Y}$ denote the input and label domains. The adversary’s feasible perturbations are described by a family of datasets:
\begin{equation}
    \dfamily := \left\{ D' = \{(x_i', y_i')\}_{i=1}^n \ \middle| \ x_i' \in \mathcal{X},\ y_i' \in \mathcal{Y},\ \mathcal{C}(D, D') \right\},
\end{equation}
where $\mathcal{C}(D, D')$ is a constraint over the original and perturbed datasets.
\end{definition}

\tightparagraph{Constraint Syntax.}
The constraint set $\mathcal{C}(D, D')$ may include arbitrary linear, bilinear, logical, and integer constraints over 
(1) original and perturbed inputs $x_i$, $x_i'$, 
(2) original and perturbed labels $y_i$, $y_i'$, 
(3) auxiliary variables $z_j$, and 
(4) global variables (e.g., number of modified points).
Possible instantiations include
(1) pointwise $\ell_\infty$ bounds, e.g., $\|x_i' - x_i\|_\infty \leq \epsilon$;
(2) sparsity constraints, e.g., $\sum_{j=1}^m z_j \leq k$ with $z_j \geq \mathbb{I}[x_i[j]' \neq x_i[j]]$;
(3) monotonic constraints, e.g., $x_i'[j] \geq x_i[j]$; and 
(4) class-conditional constraints, e.g., $y_i = c \Rightarrow x_i' = x_i$.
This formulation strictly generalizes threat models used in prior certified training work~\citep{lorenz2024fullcert, sosnin2024certified, zhang2022bagflip}.

\tightparagraph{Postcondition.}
The postcondition is the property we aim to verify for all models trained on permissible datasets.
In general, this may include functional correctness, abstention, fairness, or task-specific safety properties.
While our method supports arbitrary postconditions that can be encoded as MIBP, we follow prior work and focus on classification correctness for our experiments.
\begin{definition}[Certificate]
\label{def:certificate}
Given a test input $(x, y)$, initial parameters $\theta_0$, and training algorithm $A$, a certificate guarantees:
\begin{equation}
    f_{\theta'}(x) = y \quad \forall D'\! \in \dfamily \ \land\ \theta' = A(D'\!, \theta_0).
    \label{eq:certificate}
\end{equation}
\end{definition}

\tightparagraph{Parameter Bounds.}
Verifying if \cref{eq:certificate} holds directly is typically intractable. Instead, we bound the space of parameters reachable by the training algorithm under the threat model:

\begin{definition}[Parameter Bounds]
\label{def:parameter_bounds}
We define bounds $[\underline{\theta}, \overline{\theta}]$ such that
\begin{equation}
    \theta' = A(D'\!, \theta_0) \Rightarrow \theta'\! \in [\underline{\theta}, \overline{\theta}] \quad \forall D'\! \in \dfamily.
    \label{eq:parameter-bounds}
\end{equation}
\end{definition}

This yields the following sufficient condition for certification:
\begin{proposition}
\label{prop:cert}
If $f_{\theta'}(x) = y$ for all $\theta'\! \in [\underline{\theta}, \overline{\theta}]$, then \cref{def:certificate} holds.
\end{proposition}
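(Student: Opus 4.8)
The plan is a direct chaining of the two quantified implications already in hand; notably, no properties of the training algorithm $A$, the model $f$, or the family $\dfamily$ are required beyond what is already encoded in the box $[\underline{\theta}, \overline{\theta}]$. Fix an arbitrary $D' \in \dfamily$ and set $\theta' = A(D', \theta_0)$. First I would invoke \cref{def:parameter_bounds}: because $[\underline{\theta}, \overline{\theta}]$ is assumed to satisfy \cref{eq:parameter-bounds}, the premise $\theta' = A(D', \theta_0)$ triggers the conclusion $\theta' \in [\underline{\theta}, \overline{\theta}]$. Next I would apply the hypothesis of the proposition, which asserts $f_{\theta''}(x) = y$ for every $\theta'' \in [\underline{\theta}, \overline{\theta}]$; instantiating at $\theta'' = \theta'$ gives $f_{\theta'}(x) = y$. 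Since $D'$ was an arbitrary element of $\dfamily$, the conjunction in \cref{eq:certificate} holds for all $D' \in \dfamily$ with $\theta' = A(D', \theta_0)$, which is precisely the statement of \cref{def:certificate}.

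The only point worth spelling out is that the containment $\theta' \in [\underline{\theta}, \overline{\theta}]$ is meant coordinate-wise over the full parameter vector, so that the universal quantifier over the box in the proposition's hypothesis genuinely covers $\theta'$; this is immediate from the way the box is defined in \cref{def:parameter_bounds}. There is no genuine obstacle here: the statement is a soundness-by-over-approximation lemma, established by a two-step transitivity argument. Its role is structural rather than technical — it reduces certifying the intractable condition in \cref{eq:certificate} to the easier task of verifying the postcondition uniformly over the parameter box, which is exactly the object the MIBP relaxation in the subsequent sections is designed to compute.
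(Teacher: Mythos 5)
Your argument is correct and is exactly the paper's proof (the paper simply says ``follows directly from \cref{def:parameter_bounds} by substitution''), spelled out in full: fix $D'$, use the parameter-bound containment, then instantiate the hypothesis at $\theta'$. No difference in approach, just more detail.
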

\begin{proof}
Follows directly from \cref{def:parameter_bounds} by substitution.
\end{proof}

\tightparagraph{Iterative Relaxation.}
Because training unfolds across multiple iterations, we refine parameter bounds at each step. Let $A_i$ denote the $i$-th update step. Then we compute bounds recursively as:
\begin{equation}
\label{eq:stepwise}
\underline{\theta}_i \leq A_i(D', \theta') \leq \overline{\theta}_i \quad \forall D'\! \in \dfamily,\ \theta' \in [\underline{\theta}_{i-1}, \overline{\theta}_{i-1}],
\end{equation}
with $\underline{\theta}_0 = \theta_0 = \overline{\theta}_0$ by initialization. Each step propagates bounds forward while maintaining soundness with respect to $\dfamily$.
This recursive strategy enables certifiably robust training under expressive constraints, especially well-suited for tabular data.

\subsection{Illustrative Example}
\label{sec:bimp:example}

To illustrate how \method{} captures the training process, we walk through a simple fully connected model with a single training step on a simplified example (\cref{fig:example}).
This helps build intuition for the constraint sets described in the next section.
The example has two inputs, $x_1$ and $x_2$.
The first operation is a fully connected layer (without bias) with weights $w_{11}$, $w_{12}$, $w_{21}$, and $w_{22}$.
This linear layer is followed by a ReLU non-linearity and a second fully connected layer with weights $w_5$ and $w_6$.
We use $(x_1=1, x_2=1)$ as an example input with target label $t=1$. We set the perturbations $\epsilon=1$, which leads to upper and lower bounds of $[0, 2]$ for both inputs.
In the following, we highlight all equations that belong to the optimization problem in \textcolor{\eqcolor}{\eqcolor}.
The full optimization problem with all constraints is in \cref{app:example}.

\begin{wrapfigure}{r}{0.5\textwidth}
    \centering
    \vspace{2em}
    \includegraphics[width=.5\columnwidth]{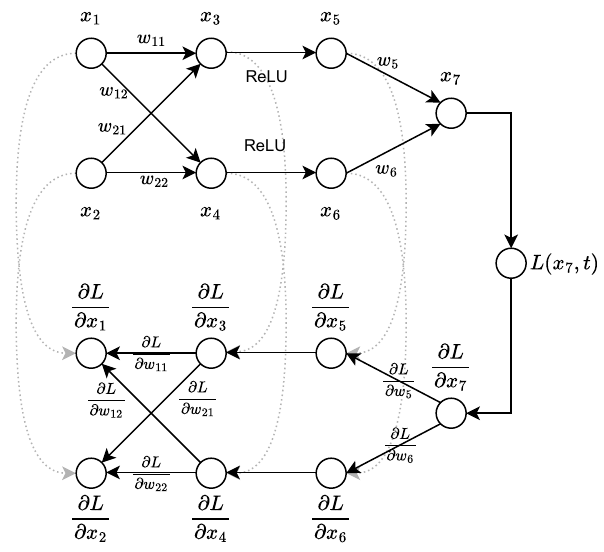}
    \caption{Illustration of a single forward and backward pass for a simplified model.}
    \label{fig:example}
    \vspace{1em}
\end{wrapfigure}

\tightparagraph{Forward Pass.} The first step is to encode the forward pass as constraints of the optimization problem.
We start with the input by defining $x_1$ and $x_2$ as variables of the optimization problem.
By the definition of the threat model, their constraints are $\color{\eqcolor} 0 \leq x_{1} \leq 2$ and $\color{\eqcolor} 0 \leq x_{2} \leq 2$.

For the fully connected layer, we encode the variables $x_3$ and $x_4$ with respect to $x_1$ and $x_2$.
We need the parameters $w_{ij}$ as variables to do so, which are set to their initial values $\color{\eqcolor} w_{11}=1$, $\color{\eqcolor} w_{12}=1$, $\color{\eqcolor} w_{21}=1$, and $\color{\eqcolor} w_{22}=-1$.
One might be tempted to substitute these variables with their values to simplify the optimization problem.
However, this would cause information loss, which would decrease the precision of the final solution.
This effect will be amplified in later rounds where $w_{ij}$ are no longer single values but intervals with upper and lower bounds.
We add the constraints $\color{\eqcolor} x_3 = w_{11} x_1 + w_{21} x_2$ and $\color{\eqcolor} x_4 = w_{12} x_1 + w_{22} x_2$ to the optimization problem.
Since we multiply two variables by each other, the problem becomes bilinear, which is one of the reasons we require a bilinear optimization problem.

The ReLU layer can be directly encoded as a piecewise linear constraint: $\color{\eqcolor} x_5=\max(0, x_3)$, or, equivalently, \textcolor{\eqcolor}{$x_5=x_3$ if $x_3 > 0$, and $x_5=0$ otherwise}.
$\color{\eqcolor} x_6$ is defined accordingly.
ReLUs are the main reason why we require mixed-integer programming, as they allow us to encode piecewise linear constraints using binary decision variables.
$x_7$ is a linear combination of the two outputs: $\color{\eqcolor} x_7 = w_5 x_5 + w_6 x_6$ with $\color{\eqcolor} w_5 = -1$ and $\color{\eqcolor} w_6 = 1$.
At this stage, we can already see that \method's bounds are more precise:
Computing the bounds for $x_7$ using FullCert, we get $-4 \leq x_7 \leq 2$, while solving the optimization problem gives $-4 \leq x_7 \leq 0$ with a tighter upper bound.

\tightparagraph{Loss.} We also encode the loss function as constraints.
Here, we use the hinge loss $\color{\eqcolor} L(x_7, t) = \max(0, 1-tx_7) = \max(0, 1 - x_7)$ in this example because it is a piecewise linear function and therefore can be exactly encoded, analogous to ReLUs.
General losses can be supported by bounding the function with piecewise linear bounds.

\tightparagraph{Backward Pass.} 
For the backward pass, we need to compute the loss gradient for each parameter using the chain rule.
It starts with the last layer, which is $\color{\eqcolor} \frac{\partial L}{\partial x_7} = -1 \text{ if } x_7 \leq 1, 0 \text{ otherwise}$.
This is also a piecewise linear function and can be encoded as a constraint to the optimization problem.

The gradients for the linear layer $x_7 = w_5 x_5 + w_6 x_6$ can be determined using the chain rule:
$\color{\eqcolor} \frac{\partial L}{\partial w_5} = x_5 \frac{\partial L}{\partial x_7}$ and $\color{\eqcolor} \frac{\partial L}{\partial x_5} = w_5 \frac{\partial L}{\partial x_7}$, with corresponding expressions for $x_6$ and $w_6$.
Given that the outer gradient $\frac{\partial L}{\partial x_7}$, as well as $x_5$ and $w_5$, are variables, this backward propagation leads to bilinear constraints.
The derivatives of ReLU are piecewise linear, resulting in \textcolor{\eqcolor}{$\frac{\partial L}{\partial x_3} = 0$ if $x_3 \leq 0$ and $\frac{\partial L}{\partial x_3} = \frac{\partial L}{\partial x_5}$ otherwise}. The derivatives for the parameters $w_{11}$, $w_{12}$, $w_{21}$, and $w_{22}$ function analogously to $w_5$ and $w_6$.

\tightparagraph{Parameter Update.}
The last step is the parameter update.
We also encode the new parameters as a constraint: $\color{\eqcolor} w'_i = w_i - \lambda \frac{\partial L}{\partial w_i}$.
Theoretically, we could directly continue with the next forward pass, using the new parameters $w'_i$, resulting in an optimization problem that precisely encodes the entire training.
However, this is computationally infeasible in practice.
We therefore relax the constraints after each parameter update by solving the optimization problem for each parameter: $\color{\eqcolor} \underline{w}'_i = \min w'_i$, and $\color{\eqcolor} \overline{w}'_i = \max w'_i$, subject to the constraints that encode the forward and backward passes from above.
$\underline{w}'_i$ and $\overline{w}'_i$ are real-valued constraints that guarantee $\underline{w}'_i \leq w'_i \leq \overline{w}'_i$.
This leads to valid bounds for all parameters in consecutive iterations.

\subsection{Bounds via Mixed-Integer Bilinear Programming}
\label{sec:bimp:bounds}

A key contribution of our method \method{} is the approach to solving \cref{eq:stepwise}.
Using mixed-integer bilinear programming, we can compute an exact solution for each iteration, avoiding over-approximations.
For each training iteration, we build an optimization problem over each model parameter, with the new, updated value as the optimization target.
$\dfamily$, the current model parameters, the transformation functions of each layer, the loss, the gradients of the backward pass, and the parameter update are encoded as constraints.
This results in $2m$ optimization problems---one minimization and one maximization per parameter---for a model with $m$ parameters of the form
\begin{equation}
\begin{aligned}
    \text{min/max} \quad & \theta_{i+1}^j, \quad j = 1, \dots, m \\
    \text{subject to} \quad
    & \text{Input Constraints} \\
    & \text{Parameter Constraints} \\
    & \text{Layer Constraints} \\
    & \text{Loss Constraints} \\
    & \text{Gradient Constraints} \\
    & \text{Parameter Update Constraints}.
\end{aligned}
\end{equation}

The objectives are the parameters, which we maximize and minimize independently to compute their upper and lower bounds.
The constraints are the same for all parameters and only have to be constructed once.
We present these constraints for fully connected models with ReLU activation.

\tightparagraph{Input Constraints.}
The first set of constraints encodes the allowed perturbations, in this case, the $\ell_\infty$-norm with radius $\epsilon$, where $o_k^{(0)}$ are the auxiliary variables encoding the $n$ input features:
\begin{equation}
    o_k^{(0)} \leq x_k + \epsilon, \quad k = 1, \dots, n, \qquad
    o_k^{(0)} \geq x_k - \epsilon, \quad k = 1, \dots, n.
\end{equation}

\tightparagraph{Parameter Constraints.}
Parameters are bounded from the second iteration, which we encode as:
\begin{equation}
    \theta_i^j \leq \overline{\theta}_i^j, \quad j = 1, \dots, m \qquad
    \theta_i^j \geq \underline{\theta}_i^j, \quad j = 1, \dots, m.
\end{equation}

\tightparagraph{Layer Constraints.}
\emph{Linear layer} constraints are linear combinations of the layer's inputs $o^{(l-1)}_v$, and the layer's weights $w^{(l)}_{uv} \in \theta_{i-1}$ and biases $b^{(l)}_u \in \theta_{i-1}$:
\begin{equation}
    o^{(l)}_u = \sum_v{w^{(l)}_{uv} o^{(l-1)}_v} + b^{(l)}_u, \quad u = 1, \dots, \card{o^{(l)}}.
\end{equation}
This results in bilinear constraints, as the layer's parameters are multiplied by the inputs, both of which are variables of the optimization problem.

\emph{ReLUs} are encoded as piecewise-linear constraints, e.g., via Big-M~\citep{griva2008linear} or SOS~\citep{beale1970special} 
\begin{equation}
    o^{(l)}_i = \begin{cases}
        0 \qquad \quad \text{ if } o^{(l-1)}_u \leq 0\\
        o^{(l-1)}_u \quad \text{ otherwise}
    \end{cases} u = 1, \dots, \card{o^{(l)}}.
\end{equation}

Additional constraints for convolution and other activation functions can be found in \cref{app:constraints}.

\tightparagraph{Loss Constraints.}
We use the hinge loss because it is piecewise linear, and we can therefore encode it exactly.
For other loss functions, we can use (piecewise) linear relaxations.
With the last-layer output $o^{(L)}$, the ground-truth label $y$, and auxiliary variable $J$, we define the constraint as \cref{eq:loss_constraint}:
\begin{minipage}{0.48\linewidth}
\begin{equation}
    \label{eq:loss_constraint}
    J = \max\left(0, 1 - y o^{(L)}\right)
\end{equation}
\end{minipage}
\hfill
\begin{minipage}{0.48\linewidth}
\begin{equation}
    \label{eq:constraint_loss_derivative}
    \frac{\partial J}{\partial o^{(L)}} = \begin{cases}
        -y \quad \text{ if } y o^{(L)} \leq 1\\
        0 \qquad \text{otherwise}
    \end{cases}
\end{equation}
\end{minipage}

\tightparagraph{Gradient Constraints.} The gradients of the hinge loss are also piecewise linear (\cref{eq:constraint_loss_derivative}).

The local gradient of the ReLU function is also piecewise linear (\cref{eq:constraint_relu_derivative_local}).
Multiplication with the upstream gradient results in a piecewise bilinear constraint (\cref{eq:constraint_relu_derivative_global}).\\
\begin{minipage}{0.48\linewidth}
\begin{equation}
    \label{eq:constraint_relu_derivative_local}
    \frac{\partial x^{(l)}_i}{\partial x^{(l-1)}_i} = \begin{cases}
        0 \quad \text{ if } x^{(l-1)}_i \leq 0\\
        1 \quad \text{ otherwise}
    \end{cases}
\end{equation}
\end{minipage}
\hfill
\begin{minipage}{0.48\linewidth}
\begin{equation}
    \label{eq:constraint_relu_derivative_global}
    \frac{\partial L}{\partial x^{(l-1)}_i} = \frac{\partial L}{\partial x^{(l)}_i} \frac{\partial x^{(l)}_i}{\partial x^{(l-1)}_i}
\end{equation}
\end{minipage}

All partial derivatives for linear layers are bilinear:\\
\begin{subequations}
\begin{minipage}{0.35\linewidth}
\begin{equation}
\frac{\partial J}{\partial o^{(l-1)}_u} = \sum_v{w_{uv} \frac{\partial L}{\partial o^{(l)}_v}}
\end{equation}
\end{minipage}
\hfill
\begin{minipage}{0.31\linewidth}
\begin{equation}
\frac{\partial J}{\partial w^{(l)}_{uv}} = o^{(l-1)}_u \frac{\partial J}{\partial o^{(l)}_v}
\end{equation}
\end{minipage}
\hfill
\begin{minipage}{0.31\linewidth}
\begin{equation}
\frac{\partial J}{\partial b^{(l)}_u} = o_u \frac{\partial J}{\partial o^{(l)}_u}
\end{equation}
\end{minipage}
\end{subequations}

\tightparagraph{Parameter Update Constraints.}
The last set of constraints is the parameter updates.
It is essential to include this step before relaxation because the old parameters are contained in both subtraction operands.
Solving this precisely is a key advantage compared to prior work (\cref{sec:analysis}).
\begin{equation}
    \label{eq:13}
    \theta_{i+1}^{j} = \theta_{i}^{j} - \lambda \frac{\partial J}{\partial \theta^{j}}, \quad j = 1, \dots, m.
\end{equation}

\subsection{Implementation}
\label{sec:mibp:implementation}

We implement \method{} using Gurobi~\citep{gurobi}, which provides native support for bilinear and piecewise-linear constraints. To represent parameter bounds throughout training, we use the open-source BoundFlow library~\citep{lorenz2024fullcert}.

During training, we iteratively build and solve a mixed-integer bilinear program for each model parameter. Each program encodes the input perturbation model, forward pass, loss, backward pass, and parameter update step, capturing the complete symbolic structure of one training iteration. Crucially, unlike interval methods, we preserve operand dependencies throughout, allowing bounds to shrink or tighten dynamically.

After training, we use the final parameter bounds to certify predictions. At inference time, we encode only the forward pass with symbolic parameters bounded by $[\underline{\theta}, \overline{\theta}]$. If one logit is provably larger than all others across this range, we return its class label; otherwise, the model abstains.

A full pseudocode listing is provided in \cref{app:algorithm}, and additional implementation details can be found in \cref{app:details}.

\section{Bound Convergence Analysis}
\label{sec:analysis}

Prior certified training methods based on interval or polyhedral relaxations~\citep{lorenz2024fullcert, sosnin2024certified} suffer from significant over-approximations introduced after each layer. While these approximations are effective for inference-time certification~\citep{boopathy2019cnn, gowal2018effectiveness, singh2019abstract}, they require robust training to compensate~\citep{mao2024ctbenchlibrarybenchmarkcertified}.

In training-time certification, the situation is different. Over-approximations accumulate across iterations and cannot be corrected during training. \citet{lorenz2024fullcert} show that these accumulated errors can prevent convergence, even at a local minimum, resulting in exploding parameter bounds.

This behavior is analyzed using the Lyapunov sequence $h_i = \|\theta_i - \theta^*\|^2$, which measures the distance to the optimum. For standard SGD~\citep{bottou1998online}, the recurrence is
\begin{equation}
    \label{eq:14}
    h_{i+1} - h_i = \underbrace{-2 \lambda_i (\theta_i - \theta^*)\nabla_\theta J(\theta_i)}_\text{distance to optimum} + \underbrace{\lambda_i^2 (\nabla_\theta J(\theta_i))^2}_\text{discrete dynamics}.
\end{equation}
Under common assumptions (bounded gradients, decaying learning rate), the second term is bounded. The key requirement is that the first term remains negative to guarantee convergence.

\citet{lorenz2024fullcert} show that this condition may fail under interval arithmetic: if the current parameter bound $\Theta_i$ and the (unknown) optimum $\Theta^*$ overlap, the worst-case inner product in \cref{eq:14} may vanish or become positive.

\method{} avoids this issue by maintaining the exact symbolic structure of the update step. Rather than reasoning over interval bounds, we evaluate the convergence condition for all possible realizations $\theta_i \in \Theta_i$, $\theta^* \in \Theta^*$. As shown by \citet{bottou1998online}, the convergence term remains negative for every such pair under the convexity assumption, ensuring that training remains stable.

A second limitation noted by \citet{lorenz2024fullcert} is that parameter intervals can only grow when both operands of the addition are intervals:
$\theta_{i+1} = \theta_i - \lambda \nabla_\theta J(\theta_i) \Rightarrow \card{\theta_{i+1}} = \card{\theta_i} + \card{\lambda \nabla_\theta J(\theta_i)}$.
This leads to expanding bounds over time, regardless of convergence.

\method{} overcomes this by computing exact parameter updates (Eq.~\ref{eq:13}), preserving dependencies and allowing parameter bounds to shrink. As a result, we do not observe divergence or instability in practice. While the exact formulation is more computationally expensive, it leads to significantly tighter bounds and more stable certified training.

\section{Experiments}

We evaluate \method{} on certified accuracy, runtime, and support for expressive threat models, comparing it to prior methods across multiple datasets.

\subsection{$\ell_\infty$-Perturbations of Continuous Features}
\label{sec:experiments:l_infty}

We evaluate \method{} experimentally and compare it to the state of the art in deterministic training certification: FullCert~\citep{lorenz2024fullcert} and \citet{sosnin2024certified}.

\begin{table}[ht]
    \centering
    \begin{tabular}{lccc}
        \toprule
        $\epsilon$ & 0.0001 & 0.001 & 0.01 \\
        \midrule
        Sosnin et al. & 83.8\%$\pm$0.02 & 82.3\%$\pm$0.03 & 69.0\%$\pm$0.10\\
        FullCert & 83.9\%$\pm$3.60 & 82.2\%$\pm$4.40 & 71.5\%$\pm$11.20 \\
        Ours & 83.3\%$\pm$0.05 & 82.0\%$\pm$0.05 & 81.4\%$\pm$0.06 \\
        \bottomrule
    \end{tabular}
    \vspace{.7em}
    \caption{Comparison of \method{} to FullCert~\citep{lorenz2024fullcert} and \citet{sosnin2024certified} for different $\epsilon$ values. The numbers represent the mean and standard deviation of certified accuracy across random seeds.}
    \vspace{-1em}
    \label{tbl:twomoons}
\end{table}

\tightparagraph{TwoMoons.} 
We evaluate on the Two-Moons dataset---two classes configured in interleaving half circles---as it is used in prior work.
\Cref{tbl:twomoons} presents the certified accuracy, i.e., the percentage of data points from a held-out test set where \cref{alg:predict} returns the ground-truth class.
All values are the mean and standard deviation across different iid seeds.

For small perturbation radii $\epsilon$, our method performs similarly to the baselines.
With increasing radius, the advantages of tighter bounds become apparent, where our method significantly outperforms the baselines.
This trend makes sense, as the over-approximations become more influential with larger perturbations.
The second advantage of our method becomes apparent when looking at the standard deviations.
The small standard deviation compared to the baselines shows a much more stable training behavior, which aligns with our analysis in \cref{sec:analysis}.

\begin{table}[ht]
  \centering
  \small
  \label{tbl:uci}
  \begin{subtable}[t]{0.32\linewidth}
    \centering
    \begin{tabular}{@{}lrrr@{}}
      \toprule
      $\epsilon$ & 0.1 & 0.2 & 0.3 \\
      \midrule
      FullCert & 88\% & 76\% & 20\% \\
      Ours     & 100\% & 92\% & 40\% \\
      \bottomrule
    \end{tabular}
    \caption{Iris 2-class}
    \label{tbl:uci_iris2}
  \end{subtable}\hfill
  \begin{subtable}[t]{0.32\linewidth}
    \centering
    \begin{tabular}{@{}lrrr@{}}
      \toprule
      $\epsilon$ & 0.00 & 0.01 & 0.02 \\
      \midrule
      FullCert & 84\% & 68\% & 64\% \\
      Ours     & 84\% & 72\% & 68\% \\
      \bottomrule
    \end{tabular}
    \caption{Iris full}
    \label{tbl:uci_iris_full}
  \end{subtable}\hfill
  \begin{subtable}[t]{0.32\linewidth}
    \centering
    \begin{tabular}{@{}lrrr@{}}
      \toprule
      $\epsilon$ & 0.000 & 0.001 & 0.010 \\
      \midrule
      FullCert & 95\% & 83\% & 38\% \\
      Ours     & 95\% & 94\% & 51\% \\
      \bottomrule
    \end{tabular}
    \caption{Breast Cancer}
    \label{tbl:uci_breast_cancer}
  \end{subtable}
  \caption{Certified accuracy of \method{} (“Ours”) and FullCert~\citep{lorenz2024fullcert} for training a 2-layer MLP  on different UCI datasets~\citep{uci}.}
  \vspace{-1em}
\end{table}

\tightparagraph{UCI Datasets.} To show the transferability of these results to different datasets, we evaluate our method on two additional datasets in the same threat model: the Iris~\citep{iris} and Breast Cancer Wisconsin~\citep{breast_cancer}.
\Cref{tbl:uci_iris2} shows certified accuracy for a 2-class subset of the Iris dataset.
Even for large perturbation radii of 0.1 after standardization, we guarantee correct prediction for all test points.
The radius decreases for larger perturbation radii, dropping below 50\% for $\epsilon=0.3$.
\Cref{tbl:uci_iris_full} shows the generalization of our method to multi-class classification.
\Cref{tbl:uci_breast_cancer} (right) demonstrates the certified accuracy for breast cancer diagnostics.
\method{} shows that the larger feature count gives the adversary more control, which decreases the certified accuracy for larger perturbations.

\subsection{Complex Constrained Perturbations on Discrete Features}
\label{sec:experiments:complex}

We investigate how to understand and provide robustness when training on health data that has been shown to be prone to corruptions, such as missing data and biases in self-reporting~\citep{althubaiti2016information}. 
Unlike prior work, \method{}'s general formulation supports fine-grained perturbation models beyond simple $\ell_p$-norms. We demonstrate this on the \emph{National Poll on Healthy Aging (NPHA)}~\citep{healthy_aging}, a tabular dataset of 714 survey responses from adults over 50. It includes 14 self-reported categorical features covering mental and physical health, sleep quality, and employment. We evaluate three structured perturbation scenarios that reflect realistic sources of data uncertainty and use \method{} to compute the worst-case impact of each. \Cref{tbl:complex} reports the \textit{certification rate} (fraction of test samples with a verified prediction) and \textit{certified accuracy} (fraction of certified samples with the correct label).

\begin{table}[ht]
    \centering
    \begin{tabular}{lcc}
        \toprule
        Precondition & Certification Rate & Certified Accuracy \\
        \midrule
        Assuming accurate health data              & (100.0\%) & 56.3\% \\
        Modeling missing mental health values      & 98.6\%  & 56.3\% \\
        Modeling missing values across all features                & 95.8\%  & 53.5\% \\
        Modeling mental health over-reporting      & 91.5\%  & 50.7\% \\
        \bottomrule
    \end{tabular}
    \vspace{.7em}
    \caption{Certification under complex perturbation models. We vary the allowed perturbations (preconditions) to assess their impact on prediction robustness.}
    \vspace{-1em}
    \label{tbl:complex}
\end{table}

\tightparagraph{Missing mental health values.} Some participants declined to answer questions on mental health, possibly introducing bias (e.g., lower-scoring individuals may be more likely to abstain). We model this by replacing the “no answer” option with a perturbation model of any valid response:
$x[j] \in \{0, 1\}, \sum_j x_j = 1$. Despite this uncertainty, 98.6\% of predictions are certifiable (guaranteed not to be affected by the missing values in the training data), and the certified accuracy is unaffected.

\tightparagraph{Missing values across all features.} Extending this idea, we model all missing values across the 14 features as perturbations that can take any valid value. The impact is larger, reducing certification to 95.8\% and certified accuracy to 53.5\%, indicating that imputation uncertainty can meaningfully affect model behavior---but on the other hand we still know the (certified) test samples for which the prediction cannot change.

\tightparagraph{Mental health over-reporting.} Self-assessments are prone to bias~\cite{althubaiti2016information}. We simulate optimistic self-reporting by allowing mental health values above the smallest value to be either the reported value $k$ or the next worse value $k{-}1$: $\forall k = \{2, \ldots, 5\}: x[j] = 0\ \forall j \notin \{k, k{-}1\}, \ x[k], x[k{-}1] \in \{0,1\}, \ x[k] + x[k{-}1] = 1$.
The result: 91.5\% of predictions remain certifiable, but in 8.5\% of cases, optimistic self-assessment in the training data can alter the model's prediction, highlighting the importance of accounting for reporting bias in downstream conclusions.

\subsection{Runtime and Complexity Analysis} 
\label{sec:experiments:runtime}

\begin{wraptable}{r}{0.5\textwidth}
    \centering
    \vspace{-1em}
    \begin{tabular}{lrr}
        \toprule
        $\epsilon$& $ 0.01$ & $ 0.1$ \\
        \midrule
        batch-size $10$ & 4 & 36 \\
        batch-size $100$ & 116 & 174552 \\
        \bottomrule
    \end{tabular}
    \caption{The number of branches explored by the optimizer during the first two training iterations, for different batch sizes and perturbation radii.
    Although theoretical limits are exponential, empirical counts remain tractable.}
    \vspace{-1em}
    \label{tbl:branches}
\end{wraptable}

\method{} achieves tighter bounds than poly\-tope-based methods, at the cost of increased optimization complexity. 
Solving a mixed-integer bilinear program (MIBP) is NP-hard in general \citep{belotti2013mixed,lee2011mixed}.

In our formulation, binary variables arise from ReLU activations and $\max$ terms in the loss, yielding approximately $(n{+}k)b$ binary decisions per iteration, where $n$ is the number of activations, $k$ the number of classes, and $b$ the batch size. 
Each optimization subproblem thus combines combinatorial branching over these binary variables with continuous bilinear relaxations that depend on the number of model parameters~$p$.
In theory, the resulting search tree thus grows exponentially in $(n{+}k)b$ and superlinearly with $p$.
But modern solvers such as Gurobi employ presolve, bound tightening, and outer-approximation heuristics that drastically prune this tree in practice, making solutions feasible for most problems.

Empirical runtimes remain several orders of magnitude below the theoretical worst case, even for large batch sizes or perturbation radii (\cref{tbl:branches}). 
Perturbation radius $\epsilon$ primarily affects complexity through the number of active ReLU switches, 
and early epochs are typically faster due to tighter parameter bounds. 
A full runtime breakdown is provided in \cref{app:runtime}.

\section{Discussion and Limitations}
\label{sec:discussion}

\method{} addresses key limitations of prior certified training methods by avoiding coarse convex over-approximations that can lead to diverging bounds and unstable learning. Using mixed-integer bilinear programming (MIBP), it computes tight bounds at each iteration, ensuring that parameter intervals can shrink over time and remain stable.

The primary benefit of this increased tightness is improved certified accuracy, particularly under larger perturbation radii. Our experiments show that \method{} not only yields higher certified accuracy but also exhibits lower variance, indicating more robust behavior across runs.

Beyond accuracy, the flexibility of our MIBP formulation enables more expressive threat models. As demonstrated in \cref{sec:experiments:complex}, it can encode structured constraints and domain-specific assumptions, something not possible with prior methods. This opens the door to modeling realistic attacks and non-adversarial perturbations alike, such as measurement noise or reporting bias in tabular domains.

These tighter, more expressive bounds come at the cost of a higher complexity, and thus, computational cost: solving a bilinear program per parameter per step is more expensive than layer-wise interval or polytope propagation. We demonstrate that it can be applied to a range of safety, security, or privacy-critical problems used in practice, and expect that follow-up work can improve on the scalability of our prototype implementation, similarly to how test-time certification methods have improved substantially over early, expensive methods that were limited to small model sizes as well. Potentially impactful directions include: Hybrid methods combining our tight bounds selectively with faster approximations elsewhere, exploiting model sparsity and structural properties for faster optimization, exploring low-level engineering efficiency gains through optimized and/or parallelized implementations, and improvements in how to guide optimizer heuristics by exploiting domain/problem knowledge.

Overall, our results suggest that high-precision certified training is not only feasible but necessary to move beyond current limitations---particularly when robustness must be guaranteed under complex perturbation models.

\section{Conclusion}

We present \method{}, a certified training approach that overcomes the limitations of prior work by avoiding loose convex over-approximations and enabling exact parameter bounds via MIBP. This leads to significantly improved certified accuracy and training stability, especially under larger perturbations and expressive threat models. Our results demonstrate that precise, structured certification is not only feasible but essential for robust learning under real-world data corruptions and attacks.

\begin{ack}
We thank the NeurIPS reviewers and area chairs for their valuable feedback.
This work was partially funded by ELSA---European Lighthouse on Secure and Safe AI funded by the European Union under grant agreement number 101070617, as well as the German Federal Ministry of Education and Research (BMBF) under the grant AIgenCY (16KIS2012), and Medizin\-informatik-Platt\-form ``Privatsph\"aren-sch\"utzende Analytik in der Medizin'' (PrivateAIM), grant number 01ZZ2316G.
MK received funding from the ERC under the European Union’s Horizon 2020 research and innovation programme (FUN2MODEL, grant agreement number 834115).
\end{ack}

\clearpage

\bibliographystyle{plainnat} 
\bibliography{references}


\newpage

\appendix

\section{Full Example}
\label{app:example}
In \cref{sec:bimp:example} we present an example model to illustrate our method.
We list the full optimization problem for the first training iteration here.

    \begin{minipage}{0.45\textwidth}
    \begin{align*}
        \text{min/max} \quad & w'_i, \quad i \in \{11, 12, 21, 22, 5, 6\} \\
        \text{subject to} \quad
        & 0 \leq x_1 \leq 2 \\
        & 0 \leq x_2 \leq 2 \\
        & 1 \leq w_{11} \leq 1 \\
        & 1 \leq w_{12} \leq 1 \\
        & 1 \leq w_{21} \leq 1 \\
        & {-1} \leq w_{22} \leq {-1} \\
        & {-1} \leq w_{5} \leq {-1} \\
        & 1 \leq w_{6} \leq 1 \\
        & x_3 = w_{11} x_1 + w_{21} x_2 \\        
        & x_4 = w_{12} x_1 + w_{22} x_2 \\   
        & x_5 = \begin{cases} 
            x_3 & \text{if } x_3 > 0 \\
            0 & \text{if } x_3 = 0
        \end{cases} \\
        & x_6 = \begin{cases} 
            x_4 & \text{if } x_4 > 0 \\
            0 & \text{if } x_4 = 0
        \end{cases} \\
        & x_7 = w_5 x_5 + w_6 x_6 \\
        & L = \begin{cases} 
            1 - x_7 & \text{if } 1 - x_7 > 0 \\
            0 & \text{otherwise}
        \end{cases} \\
        & \frac{\partial L}{\partial x_7} = \begin{cases} 
            {-1} & \text{if } 1 - x_7 > 0 \\
            0 & \text{otherwise}
        \end{cases} \\
        & \frac{\partial L}{\partial w_6} = x_6 \frac{\partial L}{\partial x_7} \\
        & \frac{\partial L}{\partial x_6} = w_6 \frac{\partial L}{\partial x_7} \\
    \end{align*}
    \end{minipage}
    \hfill
    \begin{minipage}{.45\textwidth}
    \begin{align*}
        & \frac{\partial L}{\partial w_5} = x_5 \frac{\partial L}{\partial x_7} \\
        & \frac{\partial L}{\partial x_5} = w_5 \frac{\partial L}{\partial x_7} \\
        & \frac{\partial L}{\partial x_4} = \begin{cases} 
            \frac{\partial L}{\partial x_6} & \text{if } x_4 > 0 \\
            0 & \text{if } x_4 = 0
        \end{cases} \\
        & \frac{\partial L}{\partial x_3} = \begin{cases} 
            \frac{\partial L}{\partial x_5} & \text{if } x_3 > 0 \\
            0 & \text{if } x_3 = 0
        \end{cases} \\
        & \frac{\partial L}{\partial w_{11}} = x_1 \frac{\partial L}{\partial x_3} \\
        & \frac{\partial L}{\partial w_{12}} = x_1 \frac{\partial L}{\partial x_4} \\
        & \frac{\partial L}{\partial w_{21}} = x_2 \frac{\partial L}{\partial x_3} \\
        & \frac{\partial L}{\partial w_{22}} = x_2 \frac{\partial L}{\partial x_4} \\
        & \frac{\partial L}{\partial x_2} = w_{21} \frac{\partial L}{\partial x_3} + w_{22} \frac{\partial L}{\partial x_4} \\
        & \frac{\partial L}{\partial x_1} = w_{11} \frac{\partial L}{\partial x_3} + w_{12} \frac{\partial L}{\partial x_4} \\
        & w'_i = w_i - \lambda \frac{\partial L}{\partial w_i} \\
    \end{align*}
    \end{minipage}

\section{Training and Prediction Algorithms}
\label{app:algorithm}

We implement the optimization procedure outlined in \cref{sec:mibp:implementation} according to \cref{alg:MIBP}.
For each iteration of the training algorithm, we initialize an optimization problem (5) and add the parameter bounds as constraints (6).
For each data point $x$, we add the input constraints (8), layer constraints (10), loss constraints (12), gradient constraints (13-15), and parameter update constraints (18).
We then solve the optimization problem for each parameter, once for the upper and once for the lower bound (19-20).
The algorithm returns the final parameter bounds.

Once the model is trained, we can use the final parameter bounds for prediction (\cref{alg:predict}).
The principle is the same as encoding the forward pass for training (3-7).
For classification, we can then compare the logit and check whether one is always greater than all others (9-14).
If so, we return the corresponding class (16).
Otherwise, we cannot guarantee a prediction, and return \textit{abstain} (19).\\

\begin{algorithm}[h]
\caption{MIBP Train}
\label{alg:MIBP}
\begin{algorithmic}[1]
\STATE {\bfseries Input:} dataset $D$, initial parameters $\theta_0 \in \mathbb{R}^m$, iterations $n \in \mathbb{N}$
\STATE {\bfseries Output:} parameter bounds $\underline{\theta}_n, \overline{\theta}_n \in \mathbb{R}^m$
\STATE Initialize $\underline{\theta}_0, \overline{\theta}_0 \gets \theta_0$
\FOR{$i = 1$ {\bfseries to} $n$}
    \STATE $\mathrm{mibp} \gets \textit{initialize\_optimization}()$
    \STATE $\mathrm{mibp}$.\textit{add\_parameter\_constraints}$(\underline{\theta}_{i-1}, \overline{\theta}_{i-1})$
    \FOR{$x \in D$}
        \STATE $\mathrm{mibp}$.\textit{add\_input\_constraints}$(x)$
        \FOR{each layer $l = 1$ {\bfseries to} $L$}
            \STATE $\mathrm{mibp}$.\textit{add\_layer\_constraints}$(l)$
        \ENDFOR
        \STATE $\mathrm{mibp}$.\textit{add\_loss\_constraints}$()$
        \STATE $\mathrm{mibp}$.\textit{add\_loss\_gradient\_constraints}$()$
        \FOR{each layer $l = L$ {\bfseries to} $1$}
            \STATE $\mathrm{mibp}$.\textit{add\_gradient\_constraints}$(l)$
        \ENDFOR
    \ENDFOR
    \STATE $\mathrm{mibp}$.\textit{add\_parameter\_update\_constraints}$()$
    \STATE $\underline{\theta}_i = \mathrm{mibp}$.\textit{minimize}$(\theta_i)$
    \STATE $\overline{\theta}_i = \mathrm{mibp}$.\textit{maximize}$(\theta_i)$
\ENDFOR
\STATE {\bfseries Return} $\underline{\theta}_n$, $\overline{\theta}_n$
\end{algorithmic}
\end{algorithm}

\begin{algorithm}[h]
\caption{MIBP Predict}
\label{alg:predict}
\begin{algorithmic}[1]
\STATE {\bfseries Input:} test data $x$, parameter bounds $\underline{\theta}, \overline{\theta} \in \mathbb{R}^m$
\STATE {\bfseries Output:} certified prediction $y$, or \textit{abstain}
\STATE $\mathrm{mibp} \gets \textit{initialize\_optimization\_problem}()$
\STATE $\mathrm{mibp}$.\textit{add\_parameter\_constraints}$(\underline{\theta}, \overline{\theta})$
\STATE $\mathrm{mibp}$.\textit{add\_input\_variables}$(x)$
\FOR{layer $l = 1$ {\bfseries to} $L$}
    \STATE $\mathrm{mibp}$.\textit{add\_layer\_constraints}$(l)$
\ENDFOR
\FOR{{\bfseries each} logit $o_u^{(L)}$}
    \STATE $c \gets \mathrm{True}$
    \FOR{{\bfseries each} logit $o_v^{(L)} \neq o_u^{(L)}$}
        \STATE $c_v = \mathrm{mibp}$.\textit{minimize}$(o_u^{(L)} - o_v^{(L)})$
            \STATE $c \gets c \land (c_v \geq 0)$
    \ENDFOR
    \IF{$c$}
        \STATE {\bfseries Return} $u$
    \ENDIF
\ENDFOR
\STATE {\bfseries Return} \textit{abstain}
\end{algorithmic}
\end{algorithm}

\section{Additional Constraints}
\label{app:constraints}

Our method for bounding neural network functions introduced in \cref{sec:MIBP:theory} is general and can be extended to layer types beyond linear and ReLUs introduced in \cref{sec:bimp:bounds}, as we show on the examples of convolution and general activation functions:

\paragraph{Convolution.} Convolution layers are linear operations and can be encoded as linear constraints. Given an input tensor $X \in \mathbb{R}^{C_{\text{in}} \times H \times W}$ and a kernel $K \in \mathbb{R}^{C_{\text{out}} \times C_{\text{in}} \times k_H \times k_W}$, the convolution output $Y \in \mathbb{R}^{C_{\text{out}} \times H' \times W'}$ is defined by:
\begin{equation}
    Y_{c,i,j} = \sum_{c'=1}^{C_{\text{in}}} \sum_{m=1}^{k_H} \sum_{n=1}^{k_W} K_{c,c',m,n} \cdot X_{c',\,i+m-1,\,j+n-1} + b_c.
\end{equation}
This operation is equivalent to a sparse affine transformation and can be flattened into a set of linear constraints $Y=AX + b$, where $A$ encodes the convolution as a sparse matrix multiplication and $b$ is the bias. These constraints can be directly integrated into our MIBP formulation.

\paragraph{General Activations.}
\method{} also supports other non-linear activation functions (e.g., sigmoid, tanh) by employing piecewise-linear upper and lower bounds over the relevant input range. For example, in the range $[0, 1]$:
\begin{itemize}
    \item{Sigmoid:} $0.25x + 0.48 \leq \sigma(x) \leq 0.25x + 0.5$
    \item{Tanh:} $0.76x \leq \tanh{x} \leq x$.
\end{itemize}
Although these relaxations introduce some degree of over-approximation, their piecewise-linear formulation allows the bounds to be made arbitrarily tight.

\section{Training and Implementation Details}
\label{app:details}

\tightparagraph{Implementation Details.}
We build on \citet{lorenz2024fullcert}'s open-source library BoundFlow with an MIT license, which integrates with PyTorch~\citep{Paszke2019pytorch} for its basic tensor representations and arithmetic.
As a solver backend, we use Gurobi version 10.0.1 with an academic license.
Gurobi is a state-of-the-art commercial solver that is stable and natively supports solving mixed-integer bilinear programs.
It also provides a Python interface for easy integration with deep learning pipelines.

\tightparagraph{Compute Cluster.}
All computations are performed on a compute cluster, which mainly consists of AMD Rome 7742 CPUs with 128 cores and 2.25 GHz.
Each task is allocated up to 32 cores.
No GPUs are used since Gurobi does not use them for solving.

\tightparagraph{Dataset Details.}

\emph{Two Moons (Synthetic)} We use the popular Two Moons dataset, generated via scikit-learn~\citep{pedregosa2011scikit}. It is a 2D synthetic binary classification task with non-linear decision boundaries, commonly used to visualize model behavior and certification properties. We set the noise parameter to $0.1$ and sample 100 points for training, 200 points for validation, and 200 points for testing, respectively.

\emph{UCI Iris~\citep{iris}}
The Iris dataset is a classic multi-class classification benchmark with 150 samples and 4 continuous features. We experiment with both the full 3-class setting (100 train, 25 validation, 25 test) and a reduced binary subset of the first two classes (using 50 train, 25 validation, 25 test). The low input dimensionality and categorical nature of the target make it a useful testbed for analyzing certified training on tabular data.

\emph{UCI Breast Cancer Wisconsin~\citep{breast_cancer}}
We use the UCI Breast Cancer Wisconsin dataset (binary classification) with 30 continuous input features. We split the data into 369 training, 100 validation, and 100 test samples. The dataset allows us to study our method’s scalability to medium-sized tabular data and its performance under realistic feature distributions and class imbalances.

The \emph{National Poll on Healthy Aging (NPHA)}~\cite{healthy_aging} is a tabular medical dataset with 14 categorical features and 3 target classes. The features represent age, physical health, mental health, dental health, employment, whether stress, medication, pain, bathroom needs, or unknown reasons keep patients from sleeping, general trouble with sleeping, the frequency of taking prescription sleep medication, race, and gender. The target is the frequency of doctor visits. We randomly (iid) split the data points into 3 independent sets, with 10 \% for validation, 10 \% for testing, and the remainder for training.

\tightparagraph{Model Architecture.}
Unless indicated otherwise, we use fully connected networks with ReLU activations, two layers, and 20 neurons per layer.
For binary classification problems, we use hinge loss, i.e., $J = \max(0, 1 - y \cdot f(x))$, because it is piecewise linear and can therefore be encoded exactly.
It produces similar results to Binary Cross-Entropy loss for regular training without perturbations.

\tightparagraph{Training Details.}
We train models until convergence using a held-out validation set, typically after 5 to 10 epochs on Two-Moons.
We use a default batch size of 100 and a constant learning rate of 0.1.
We sub-sample the training set with 100 points per iteration.
All reported numbers are computed using a held-out test set that was not used during training or hyper-parameter configuration.

\tightparagraph{Comparison with Prior Work.}
For the comparison to \citet{lorenz2024fullcert}, we use the numbers from their paper.
Since \citet{sosnin2024certified} do not report certified accuracy for Two-Moons, we train new models in an equivalent setup.
As a starting point, we use the Two-Moons configurations provided in their code.
We change the model architecture to match ours, i.e., reducing the number of hidden neurons to 20.
We also set $n=\card{D}$ to adjust the threat model to be the same as ours.
The solver mode is left at its preconfigured ``interval+crown'' for the forward pass and ``interval'' for the backward pass.
Contrary to \citet{sosnin2024certified}, we do not increase the separation between the two classes compared to the default scikit-learn implementation. We also do not add additional polynomial features.

\section{Additional Experiments}

\subsection{Model Architecture Ablations}

\Cref{sec:experiments:runtime} shows that the complexity of \method{} mainly depends on the number of activations $n$, the number of classes $k$, and the batch size $b$. \Cref{tbl:branches} demonstrates the influence of batch size and perturbation size experimentally. Here, we empirically analyze the influence of the model width and depth, which both influence the number of parameters and activations. We use the same models and data as in \cref{tbl:twomoons}, with $\epsilon = 0.01$, training for 1 epoch with varying network width and depth.

\begin{table}[h]
    \centering
    \begin{tabular}{lrr}
        \toprule
        Layers $\times$ width & Total parameters & Training time \\
        \midrule
        2 $\times$ 5 & 27 & 0.75 s \\
        2 $\times$ 10 & 52 & 1.24 s \\
        2 $\times$ 20 & 102 & 4.79 s \\
        2 $\times$ 30 & 152 & 27.7 s \\
        \bottomrule
    \end{tabular}
    \vspace{.5em}
    \caption{\method{}'s runtime for 1 epoch with different layer widths.}
    \label{tbl:model_width}
\end{table}

\begin{table}[h]
    \centering
    \begin{tabular}{lrr}
        \toprule
        Layers $\times$ width & Total parameters & Training time \\
        \midrule
        2 $\times$ 5 & 27 & 0.75 s \\
        3 $\times$ 5 & 57 & 2.37 s \\
        4 $\times$ 5 & 87 & 6.24 s \\
        5 $\times$ 5 & 117 & 12.9 s \\
        6 $\times$ 5 & 147 & 22.1 s \\
        \bottomrule
    \end{tabular}
    \vspace{.5em}
    \caption{\method{}'s runtime for 1 epoch with different numbers of layers.}
    \label{tbl:model_depth}
\end{table}

The results show that training time corresponds roughly to the number of parameters.

\subsection{Runtime Comparison}
\label{app:runtime}

To compare the real-world execution time of \method{} compared to prior work, we report the average runtime per epoch on the Two-Moons dataset, in the same setting as \cref{tbl:twomoons}. FullCert and Sosnin et al. in pure IBP mode both take ~0.014s per epoch. IBP+CROWN mode (the default for TwoMoons) takes ~0.04s per epoch, and ~0.05s per epoch for CROWN. This is relatively consistent across different perturbation radii ($\epsilon$) and epochs. We report the average runtime for the first 5 epochs in \cref{tbl:runtime_comparison} below. Training typically either converge or diverge after 5 epochs.

\begin{table}[h]
    \centering
    \begin{tabular}{lrrrrrr}
        \toprule
        & \multicolumn{2}{c}{$\epsilon = 0.0001$} & \multicolumn{2}{c}{$\epsilon = 0.001$} & \multicolumn{2}{c}{$\epsilon = 0.01$} \\
        \cmidrule(lr){2-3} \cmidrule(lr){4-5} \cmidrule(lr){6-7}
        Method & Time & CA & Time & CA & Time & CA \\
        \midrule
        FullCert & 0.01 s & 83.9\% & 0.01 s & 82.2\% & 0.01 s & 71.5\% \\
        CROWN+IBP & 0.04 s & 83.8\% & 0.04 s & 82.3\% & 0.03 s & 69.0\% \\
        \method{} (ours) & 19 s & 83.3\% & 54 s & 82.0\% & 156 s & 81.4\% \\
        \bottomrule
    \end{tabular}
    \vspace{.5em}
    \caption{Runtime of different certified training methods for a 2-layer MLP on TwoMoons. ``Time'' is the average runtime per epoch across the first 5 epochs, and ``CA'' is the certified accuracy. }
    \label{tbl:runtime_comparison}
\end{table}

While our method is slower, especially for larger $\epsilon$, it delivers significantly improved certified accuracy and training stability, validating our theoretical analysis regarding the runtime-precision trade-off. Furthermore, as \cref{tab:method-comparison} and \cref{sec:experiments:complex} demonstrate, our method uniquely certifies perturbation types that were previously impossible.


\end{document}